\newcounter{MaxCounter}
\newcounter{SohirCounter}
\newcounter{AliCounter}
		\newcounter{HaggaiCounter}
\title[Generalized Laplacian Positional Encoding for Graph Representation Learning ]{Generalized Laplacian Positional Encoding for Graph Representation Learning }
  \author{\Name{Sohir Maskey}\thanks{Equal contribution.} \Email{maskey@math.lmu.de}\\
  \addr Ludwig-Maximilian University of Munich
  \AND
 \Name{Ali Parviz}\footnotemark[1]  \Email{ali.parviz@mila.quebec}\\
  \addr MILA, New Jersey Institute of Technology
  \AND
  \Name{Maximilian Thiessen} \Email{maximilian.thiessen@tuwien.ac.at}\\
  \addr  TU Wien
 \AND
 \Name{Hannes St\"ark} \Email{hstark@mit.edu}\\
  \addr Massachusetts Institute of Technology
  \AND
  \Name{Ylli Sadikaj} \Email{ylli.sadikaj@univie.ac.at}\\
  \addr University of Vienna
  \AND
  \Name{Haggai Maron} \Email{hmaron@nvidia.com}\\
  \addr NVIDIA Research
 }
\begin{document}

\maketitle

\begin{abstract}
Graph neural networks (GNNs) are the primary tool for processing graph-structured data. Unfortunately, the most commonly used GNNs, called Message Passing Neural Networks (MPNNs) suffer from several fundamental limitations. To overcome these limitations, recent works have adapted the idea of positional encodings to graph data.  
This paper draws inspiration from the recent success of Laplacian-based positional encoding and defines a novel family of positional encoding schemes for graphs. We accomplish this by generalizing the optimization problem that defines the Laplace embedding to more general dissimilarity functions rather than the $2$-norm used in the original formulation. This family of positional encodings is then instantiated by considering $p$-norms. We discuss a method for calculating these positional encoding schemes, implement it in PyTorch and demonstrate how the resulting positional encoding captures different properties of the graph. Furthermore, we demonstrate that this novel family of positional encodings can improve the expressive power of MPNNs. Lastly, we present preliminary experimental results. 
\end{abstract}
\begin{keywords}
Graph Neural Networks, Positional Encoding, Graph Laplacian\end{keywords}

\section{Introduction} \label{sec:intro}
Over the past few years, graph neural networks have become the most popular tool for processing graph-structured data. Message Passing Neural Networks (MPNNs) \citep{gilmer2017neural}, which update node features through local aggregation, are currently the most popular type of GNN architecture. Despite their success across a wide variety of domains, MPNNs have several severe limitations, such as their bounded expressive power \citep{xu2018powerful,morris2019weisfeiler,morris2021weisfeiler} and their inability to solve simple link prediction problems \citep{zhang2021labeling}. Many recent studies have suggested overcoming these limitations by using \emph{Positional Encoding} (PE), a node embedding that encodes a node's location in a graph, as its initial feature \citep{dwivedi2020benchmarking,wang2022equivariant}. This idea originates from the field of text processing where sequential positional encoding is frequently used when using transformer architectures \citep{original_transformer}.

Unfortunately, sequential positional encoding is not suitable for processing graphs since in most cases, nodes in graphs lack a canonical order. One of the most prominent realizations of PE for graphs utilizes the graph Laplacian eigenvectors as node features \citep{kreuzer2021rethinking,dwivedi2020generalization}. It has been demonstrated that this PE scheme improves the performance of MPNNs in a variety of tasks \citep{dwivedi2020benchmarking,dwivediPositionalEncodings, wang2022equivariant,lim2022sign}. One way to define the Laplacian-based PE (LPE) is as the solution to the minimization problem of finding an embedding $X=[X_{1,:},\dots,X_{n,:}]\in \mathbb{R}^{n\times k}$ that adheres to the affinity structure of the graph \citep{belkin2003laplacian}: 
\begin{equation} \label{eq:lpe_minimization}
    \min_{X \in \mathbb{R}^{n \times k} } \sum_{i,j}a_{ij} \|X_{i,:}-X_{j,:} \|_2^2=\min_{X} \text{tr}(X^TLX), \quad \text{s.t.} \quad  X^TDX=I_k.
\end{equation}

Here, $A=(a_{i,j})_{i,j=1}^n$ represent the affinities (or adjacency information) between nodes, $D=\text{diag}(A1)$ and $L=D-A$ is the graph Laplacian \citep{chung1997spectral}. Minimizing this problem can be accomplished by finding the Laplacian eigendecomposition. Importantly, while the intuition of Equation \eqref{eq:lpe_minimization} is clear - finding an embedding whose distance between nodes with high affinity is small - the specific choice of the $2$-norm is arbitrary.


The purpose of this paper is to generalize this successful positional encoding scheme. Our main observation is that meaningful node embeddings can be defined as solutions to a more general family of optimization problems, obtained by replacing the $2$-norm in Equation \eqref{eq:lpe_minimization} with other dissimilarity functions. Motivated by this observation, we first define a novel family of positional encodings based on any dissimilarity function defined in the embedding space. We then instantiate this family by using $p$-norms. We dub the resulting positional encoding scheme \emph{$p$-PE}.
We show how to calculate $p$-PE embeddings using a local minimization scheme and demonstrate that these node embeddings capture different properties of the graph: while LPEs vary smoothly across the graph, when $p\rightarrow 1$ $p$-PE behaves like a soft clustering function, and is able to identify meaningful graph structures such as rings in molecules. See Figure \ref{fig:eigen functions}. 
We present preliminary experimental results, which indicate
that in most cases LPE is still superior to p-PEs. Lastly, we discuss future research directions and challenges encountered in our research on this topic in order to encourage future research on this topic.

\begin{figure*}\label{fig:eigen functions}
   \centering 
\includegraphics[width= \linewidth]{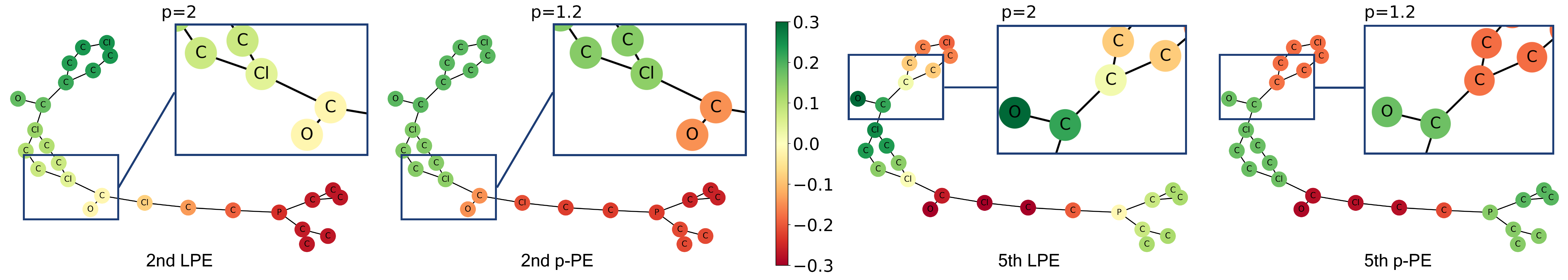}
    \caption{
    Comparison between $p$-PEs ($p=1.2$) and LPEs ($p=2$) on one ZINC-molecule. 
    }
    \label{fig:fig1} 
\end{figure*}


\section{Generalized Laplacian positional encoding}
In this subsection, we formulate a novel family of positional encoding schemes for graphs. The main idea is to build on the formulation of the Laplacian positional encoding as a minimizer of the optimization problem in Equation \eqref{eq:lpe_minimization} and replace the $2$-norm with more general dissimilarity functions. This gives rise to the following optimization:

\begin{equation} \label{eq:general_lap_minimization}
    \min_{X\in \mathbb{R}^{n\times k}} \sum_{i,j}a_{ij} d(X_{i,:},X_{j,:}) \quad \text{s.t.} \quad  C(X)=0,
\end{equation}

where $d(\cdot,\cdot)$ is a dissimilarity function, $k$ is the dimension of the embedding and $C:\mathbb{R}^{n\times k} \rightarrow \mathbb{R}^{m}$ is a function that formulates the constraints on the embedding matrix $X$,  making sure the solution space does not contain any trivial (e.g., constant) solutions\footnote{Note that $d$ can also take $X$ as an input, although this is not reflected in Equation \ref{eq:general_lap_minimization} for simplicity. }. Once computed, $X_{i,:}$ is used as an additional initial features for the $i$-th node. 

\paragraph{Examples.} It is easy to see that when using $d(\cdot,\cdot)=\|\cdot-\cdot \|_2^2$ and an appropriate choice of $C$ we recover the LPE formulation in Equation \eqref{eq:lpe_minimization}. There are many other choices for $d$ and $C$ which give rise to new, unexplored positional encoding schemes. Let us review several such choices.  As for $d$, interesting choices include $p$-norms $d(\cdot,\cdot)=\|\cdot-\cdot \|_p^p$, and clipped or smoothed  distance functions. For $C$, besides the normalized orthogonality constraint in Equation \eqref{eq:lpe_minimization} one can enforce standard orthogonality, namely $X^TX=I_k$ or constraints that enforce minimal distance between the rows of $X$.  
The remainder of this paper explores the possibility of using $p$-norms as the distance function, as outlined in the next section. 


\section{$p$-norm based positional encoding}
In this section we explore a natural instantiation of Equation \eqref{eq:general_lap_minimization} based on $p$-norms. Replacing the $2$-norm with (normalized) $p$-norms and adding an orthogonality constraint results in the following optimization problem, first introduced by \citep{luo2010eigenvectors}:

\begin{equation} \label{eq:p_minimization}
    \min_{X \in \mathbb{R}^{n \times k}} \sum_k \frac{\sum_{i,j}a_{ij}  |X_{i,k}-X_{j,k} |^p}{\| X_{:,k}\|^p_p} \quad \text{s.t.} \quad  X^TX=I_k.
\end{equation}
This formulation is useful for two main reasons: (1) The constraint $X^TX=I_k$ implies that $X$ is constrained to the Stiefel manifold, namely $X\in \mathrm{St}(k,n)$. As a result, optimization can be performed using Riemannian SGD. (2) The solutions to this optimization problem are an approximation of objects known as $p$-eigenvectors, which have appealing theoretical properties.  The solutions to this optimization problem are referred to as $p$-PEs.

\paragraph{Optimization.} 
As the Stiefel manifold $\mathrm{St}(k,n)$ is a Riemannian manifold we perform Riemannian SGD on $\mathrm{St}(k,n)$ to solve for the minimizer of (\ref{eq:p_minimization}). Riemannian SGD generalizes Euclidean SGD to Riemannian manifolds by replacing the Euclidean gradient by the Riemannian gradient and the Euclidean straight line by geodesics. We implement the Riemannian SGD algorithm, and its variations such as Riemannian Adam,  on the Stiefel manifold with geoopt \citep{kochurov2020geoopt}.
Minimizing Equation (\ref{eq:p_minimization}) directly for a small value of $p$ often results in convergence to a non-optimal local minimum. Thus we use a continuation method, as suggested in \citep{buhler2009spectral}, that leverages the fact that the minimization problem has an efficient solution for $p=2$. More precisely, we first solve for the solution of Equation \eqref{eq:p_minimization} for $p=2$ using an eigendecomposition of the Laplacian, and then gradually decrease the value of $p$ with the solution at the current $p$ serving as the initialization of the next $p$.

\paragraph{Relation to $p$-Laplacians and expressive power.} The minimization problem in Equation \eqref{eq:p_minimization} is tightly connected to a generalization of the Laplacian operator called the $p$-Laplacian \citep{luo2010eigenvectors}. Remarkably, the solutions to the minimization problem above  can be shown to be an approximation of the eigenvectors of this operator (under proper definitions), called $p$-eigenvectors. The $p$-eigenvectors share many properties with the standard eigenvectors (i.e., $p=2$) of the graph Laplacian. For instance, the number of connected components in the graph is the multiplicity of its first eigenvector. Previous works leveraged $p$-eigenvectors for graph learning: for example, \citep{buhler2009spectral}  used the fact that for $p \to 1$ the cut obtained by thresholding the first non-trivial $p$-eigenvector converges to the ratio Cheeger cut. Recently, $p$-Laplacians were also  used in \citep{pmlr-v162-fu22e} as a motivation for a new GNN architecture. More details on $p$-Laplacians can be found in the appendix.
Lastly, previous works have shown that using LPE improves the expressive power of MPNNs \citep{lim2022sign}. Next, we generalize this result to $p$-eigenvectors.

\begin{restatable}{theorem}{BetterThanWL}\label{thm:better_than_WL}
For any $p\in (1,\infty)$, MPNNs augmented with at least two $p$-eigenvectors are strictly more expressive than the $1$-WL test.
\end{restatable}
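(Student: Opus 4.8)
The plan is to establish the two inclusions that together give strict separation. First I would show that MPNNs augmented with $p$-eigenvectors are \emph{at least} as expressive as $1$-WL: any such network can reproduce a plain MPNN by simply ignoring the extra positional channels (e.g.\ zeroing the weights acting on them), and plain MPNNs already match the distinguishing power of $1$-WL \citep{xu2018powerful,morris2019weisfeiler}. Hence every pair separated by $1$-WL is separated by the augmented model, and it remains only to exhibit a single pair of graphs that the augmented model separates but $1$-WL does not.

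For the separating pair I would take $G_1 = C_6$, the $6$-cycle, and $G_2 = C_3 \sqcup C_3$, the disjoint union of two triangles. Both graphs are $2$-regular on $6$ vertices, so starting from a uniform colouring the $1$-WL refinement assigns to every node the hash of $(\text{colour}, \{\text{colour}, \text{colour}\})$ in every round; the colouring never refines and the two colour histograms are identical. Thus $1$-WL cannot distinguish $G_1$ from $G_2$; this is the standard minimal example of a $1$-WL-indistinguishable pair.

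To see that the augmented MPNN separates them, I would invoke the property quoted in the excerpt: the multiplicity of the first $p$-eigenvalue equals the number of connected components. For the connected graph $G_1$ this nullspace is one-dimensional, so the two smallest $p$-eigenvectors consist of the trivial constant vector together with a genuinely non-constant vector of strictly positive energy $\sum_{i,j} a_{ij}|X_{i,k}-X_{j,k}|^p > 0$. For $G_2$ the nullspace is two-dimensional, so \emph{both} supplied $p$-eigenvectors lie in it and are therefore constant on each triangle, giving zero energy in both columns. Crucially, an MPNN reads this off in a manner insensitive to the sign/basis ambiguity of the eigenvectors: a single message-passing layer computes at each node $i$ and channel $k$ the local term $\sum_{j\sim i}|X_{i,k}-X_{j,k}|^p$, and a sum readout aggregates these into the total edge-energy $\sum_k\sum_{i,j} a_{ij}|X_{i,k}-X_{j,k}|^p$ (the numerator of \eqref{eq:p_minimization}). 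This scalar is $0$ for $G_2$ and strictly positive for $G_1$, so the two graph-level outputs differ.

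The main obstacle I anticipate is handling the non-uniqueness of $p$-eigenvectors: they are defined only up to sign, and within a degenerate eigenspace up to an orthogonal change of basis, so the separating argument must use a quantity invariant to these choices. The energy $\sum_{i,j}a_{ij}|X_{i,k}-X_{j,k}|^p$ is exactly such an invariant---zero for \emph{any} vector in the nullspace and positive for any non-constant eigenvector---which is why I route the distinguishing computation through it rather than through raw eigenvector entries. A secondary point to verify is that this holds uniformly for all $p\in(1,\infty)$: the connected-components property and the strict positivity of the second $p$-eigenvalue of the connected graph $C_6$ both persist throughout this range (the endpoints $p=1,\infty$, where the eigenvector theory degenerates, are excluded), and the required message-passing layer and injective readout are realizable by a standard MPNN.
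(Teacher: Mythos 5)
Your proposal is correct and follows the same skeleton as the paper's proof: both directions of the comparison, the same witness pair $C_6$ versus $C_3\sqcup C_3$, and the same use of the fact (Lemma~\ref{lemma:zeroEV}) that the $0$-eigenspace has dimension equal to the number of connected components. Where you genuinely diverge is in the scalar invariant used to separate the two graphs. The paper reads off $\sum_i\phi_p(v_i)$ of the second eigenvector: this vanishes for any eigenvector with nonzero eigenvalue (Lemma~\ref{lemma:nonzeroEV}) but is nonzero for a component indicator, and the network only needs a node-wise MLP approximating $\phi_p$ followed by sum readout. You instead compute the $p$-Dirichlet energy $\sum_{i,j}a_{ij}|X_{i,k}-X_{j,k}|^p$, which is zero for \emph{every} vector in the $0$-eigenspace and strictly positive for the non-constant second eigenvector of the connected graph. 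Your route buys two things: it does not need Lemma~\ref{lemma:nonzeroEV} at all, and it is invariant under arbitrary changes of basis inside the degenerate $0$-eigenspace of the disconnected graph---a point the paper handles only by assuming "without loss of generality" that the supplied eigenvectors are the component indicators, which is not automatic (e.g.\ the basis $\mathbb{1}_{C_1}\pm\mathbb{1}_{C_2}$ contains a vector with $\sum_i\phi_p=0$, breaking the paper's test but not yours). The price is that your distinguishing layer needs messages depending on both endpoints' features (to form $|X_{i,k}-X_{j,k}|^p$), which is available in the general message-passing framework but is slightly more than the node-wise readout the paper uses; as in the paper, you should note that universal approximation only gives the energy up to an error, which suffices because the target values $0$ and the positive energy of $C_6$ are separated by a fixed gap for this fixed pair of graphs.
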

It is important to note that this statement refers to $p$-eigenvectors rather than $p$-PEs which are an approximation. However, we see in practice that our optimization scheme can produce features that enable us to differentiate 1-WL equivalent graphs.




\section{Experiments}
\begin{table}[!tb]
\centering
\scriptsize
\caption{Results on the node classification task.}
\vspace{4pt}
\begin{tabular}{ccccc} 
\toprule
  model &  Cora    & Citeseer & \cr    
\midrule
 GCN+No PE   &  $83.6 \pm0.00$  & $74.7 \pm 0.01
$ &    \\
GCN+$1.2$-PE   &  $83.4 \pm 0.00$  &  $75.2 \pm 0.01
$ &  \\
GCN+$1.6$-PE   &  $83.6 \pm 0.46
$  &  $73.1 \pm 0.01$ &  \\
GCN+$2$-PE   &  $83.2 \pm 0.00$  & $74.3 \pm 0.01
$ &   \\
\bottomrule
\end{tabular}
\label{table:node_classification}
\end{table}

In this section, we empirically evaluate the quality of \emph{$p$-PE} on  two different popular graph learning tasks: 1) Graph regression on molecular graphs of the ZINC 12K dataset; and 2) Semi-supervised node classification on the Cora and Citeseer datasets.  See Appendix \ref{sec:Experiments}  for further details.

\begin{table}
\centering
\scriptsize
\caption{ZINC dataset, 500K parameter budget.}
\begin{tabular}{ll} 
\midrule
  Model    & Test MAE \cr  
  \toprule
    GINE+No PE &  $0.23 \pm 0.01$   \\
    GINE+$2$-PE (flip) &   $0.231 \pm 0.01$  \\
    GINE+$2$-PE (SN) &  $0.249 \pm 0.02$   \\
    GINE+$2$-PE-Norm. (flip) &   $	0.194 \pm 0.01$  \\
    GINE+$2$-PE-Norm. (SN) &  $0.168 \pm 0.02$     \\
    \midrule 

    GINE+$1.2$-PE (flip) &   $0.205 \pm 0.00$   \\
    GINE+$1.2$-PE (SN)   &   $0.18 \pm 0.01$  \\
    GINE+$1.6$-PE (flip) &  $0.204 \pm 0.00$  \\
    GINE+$1.6$-PE (SN)   &   $0.171 \pm 0.00$ \\
\midrule 
\midrule
Transformer+No PE &  $0.283 \pm 0.03$   \\
  Trans.+$2$-PE (flip) &   $0.351 \pm 0.18$   \\
  Trans.+$2$-PE (SN) &     $0.119 \pm 0.00$   \\
  Trans.+$2$-PE-Norm. (flip) &   $0.238 \pm 0.02$   \\
  Trans.+$2$-PE-Norm. (SN) &    $0.121 \pm 0.00$ \\
  \midrule 

  Trans.+$1.2$-PE  &   $0.296 \pm 0.01$ \\
  Trans.+$1.6$-PE  &  	$		0.287 \pm 0.01$  \\
  Trans.+$1.2$-PE (flip) &   $0.4 \pm 0.16$   \\
  Trans.+$1.2$-PE (SN) &  $0.133 \pm 0.01$    \\
  Trans.+$1.6$-PE (flip) &   	$0.41 \pm 0.18$  \\
  Trans.+$1.6$-PE (SN) &  $0.134 \pm 0.01$
            \\

\bottomrule
\end{tabular}
\label{table:zinc}
\end{table}

In our experiments we consider three settings: 1) No positional encoding, 2) $p$-PE 3) As $p$-PEs have sign ambiguity (like LPE), we  randomly flip their signs (flip) and apply SignNet (SN) \citep{lim2022sign} to process them. We focus on two base GNN architectures: GIN with edge features \citep{xu2018powerful} and a transformer with sparse attention \citep{kreuzer2021rethinking}. For the baseline, we report the results based on LPEs defined as the eigenvectors of the unnormalized Laplacian ($2$-PE) and normalized Laplacian ($2$-PE-Norm.). In all experiments we consider $k=4$. We use the standard train/val/test splits.
The results can be found in Table \ref{table:node_classification}-\ref{table:zinc} and indicate that in most cases LPE is still superior to $p$-PEs. 

\section{Conclusion}
We present a novel formulation for graph positional encoding that generalizes Laplacian-based positional encoding. We also provide a concrete implementation of this framework based on $p$-norms. Unfortunately, despite the fact that the proposed positional encoding scheme still appears promising to us, we were not able to demonstrate its effectiveness in the experiments that we conducted.

\noindent\textbf{Challenges:} The primary challenge of this research direction is to find an efficient and accurate method for minimizing Equation \ref{eq:p_minimization}. Despite the fact that the optimization scheme we used worked well for $k=4$ eigenvectors, we found it difficult to optimize for larger values of $k$. Moreover, the optimization time is longer than the time required to solve for LPEs. Developing efficient solvers for these problems is thus a worthwhile research goal. One possible direction towards reducing the optimization time is \emph{learning} a mapping between LPEs and $p$-PEs.

\noindent\textbf{Future work directions:} 
Future work can target several directions. A first step might be to experiment with different formulations of the general positional encoding scheme defined in Equation \eqref{eq:general_lap_minimization}. 
We would also like to test the effectiveness of our positional encodings on link prediction tasks, where LPE usually performs well. Finally, we want to find cases where $p$-eigenvectors are more expressive than usual positional encodings. A promising direction here might be the capability of $p$-eigenvectors with $p\rightarrow\infty$ to capture information on shortest-paths \citep{deidda2022nodal}.


\bibliography{pmlr-sample}

\begin{thebibliography}{26}
\providecommand{\natexlab}[1]{#1}
\providecommand{\url}[1]{\texttt{#1}}
\expandafter\ifx\csname urlstyle\endcsname\relax
  \providecommand{\doi}[1]{doi: #1}\else
  \providecommand{\doi}{doi: \begingroup \urlstyle{rm}\Url}\fi

\bibitem[Belkin and Niyogi(2003)]{belkin2003laplacian}
Mikhail Belkin and Partha Niyogi.
\newblock Laplacian eigenmaps for dimensionality reduction and data
  representation.
\newblock \emph{Neural computation}, 15\penalty0 (6):\penalty0 1373--1396,
  2003.

\bibitem[Biewald(2020)]{wandb}
Lukas Biewald.
\newblock Experiment tracking with weights and biases, 2020.
\newblock URL \url{https://www.wandb.com/}.
\newblock Software available from wandb.com.

\bibitem[B{\"u}hler and Hein(2009{\natexlab{a}})]{buhler2009spectral}
Thomas B{\"u}hler and Matthias Hein.
\newblock Spectral clustering based on the graph p-laplacian.
\newblock In \emph{Proceedings of the 26th annual international conference on
  machine learning}, pages 81--88, 2009{\natexlab{a}}.

\bibitem[B{\"u}hler and Hein(2009{\natexlab{b}})]{buhler2009supplementary}
Thomas B{\"u}hler and Matthias Hein.
\newblock Supplementary material for “spectral clustering based on the graph
  p-laplacian”, 2009{\natexlab{b}}.

\bibitem[Chung(1997)]{chung1997spectral}
Fan~RK Chung.
\newblock \emph{Spectral graph theory}, volume~92.
\newblock American Mathematical Soc., 1997.

\bibitem[Cybenko(1989)]{cybenko1989approximation}
George Cybenko.
\newblock Approximation by superpositions of a sigmoidal function.
\newblock \emph{Mathematics of control, signals and systems}, 2\penalty0
  (4):\penalty0 303--314, 1989.

\bibitem[Deidda et~al.(2022)Deidda, Putti, and Tudisco]{deidda2022nodal}
Piero Deidda, Mario Putti, and Francesco Tudisco.
\newblock Nodal domain count for the generalized graph $ p $-laplacian.
\newblock \emph{arXiv preprint arXiv:2201.01248}, 2022.

\bibitem[Dwivedi and Bresson(2020)]{dwivedi2020generalization}
Vijay~Prakash Dwivedi and Xavier Bresson.
\newblock A generalization of transformer networks to graphs.
\newblock \emph{arXiv preprint arXiv:2012.09699}, 2020.

\bibitem[Dwivedi et~al.(2020)Dwivedi, Joshi, Laurent, Bengio, and
  Bresson]{dwivedi2020benchmarking}
Vijay~Prakash Dwivedi, Chaitanya~K Joshi, Thomas Laurent, Yoshua Bengio, and
  Xavier Bresson.
\newblock Benchmarking graph neural networks.
\newblock \emph{arXiv preprint arXiv:2003.00982}, 2020.

\bibitem[Dwivedi et~al.(2021)Dwivedi, Luu, Laurent, Bengio, and
  Bresson]{dwivediPositionalEncodings}
Vijay~Prakash Dwivedi, Anh~Tuan Luu, Thomas Laurent, Yoshua Bengio, and Xavier
  Bresson.
\newblock Graph neural networks with learnable structural and positional
  representations.
\newblock \emph{CoRR}, abs/2110.07875, 2021.
\newblock URL \url{https://arxiv.org/abs/2110.07875}.

\bibitem[Fu et~al.(2022)Fu, Zhao, and Bian]{pmlr-v162-fu22e}
Guoji Fu, Peilin Zhao, and Yatao Bian.
\newblock $p$-{L}aplacian based graph neural networks.
\newblock In Kamalika Chaudhuri, Stefanie Jegelka, Le~Song, Csaba Szepesvari,
  Gang Niu, and Sivan Sabato, editors, \emph{Proceedings of the 39th
  International Conference on Machine Learning}, volume 162 of
  \emph{Proceedings of Machine Learning Research}, pages 6878--6917. PMLR,
  17--23 Jul 2022.
\newblock URL \url{https://proceedings.mlr.press/v162/fu22e.html}.

\bibitem[Getoor et~al.(2001)Getoor, Friedman, Koller, and
  Taskar]{Getoor2001LearningPM}
Lise Getoor, Nir Friedman, Daphne Koller, and Ben Taskar.
\newblock Learning probabilistic models of relational structure.
\newblock In \emph{ICML}, 2001.

\bibitem[Gilmer et~al.(2017)Gilmer, Schoenholz, Riley, Vinyals, and
  Dahl]{gilmer2017neural}
Justin Gilmer, Samuel~S Schoenholz, Patrick~F Riley, Oriol Vinyals, and
  George~E Dahl.
\newblock Neural message passing for quantum chemistry.
\newblock In \emph{International conference on machine learning}, pages
  1263--1272. PMLR, 2017.

\bibitem[Irwin et~al.(2012)Irwin, Sterling, Mysinger, Bolstad, and
  Coleman]{ZINC}
John~J. Irwin, Teague Sterling, Michael~M. Mysinger, Erin~S. Bolstad, and
  Ryan~G. Coleman.
\newblock Zinc: A free tool to discover chemistry for biology.
\newblock \emph{Journal of Chemical Information and Modeling}, 52\penalty0
  (7):\penalty0 1757--1768, 2012.
\newblock \doi{10.1021/ci3001277}.
\newblock URL \url{https://doi.org/10.1021/ci3001277}.
\newblock PMID: 22587354.

\bibitem[Kipf and Welling(2017)]{kipf2017semi}
Thomas~N. Kipf and Max Welling.
\newblock Semi-supervised classification with graph convolutional networks.
\newblock In \emph{International Conference on Learning Representations
  (ICLR)}, 2017.

\bibitem[Kochurov et~al.(2020)Kochurov, Karimov, and
  Kozlukov]{kochurov2020geoopt}
Max Kochurov, Rasul Karimov, and Serge Kozlukov.
\newblock Geoopt: Riemannian optimization in pytorch.
\newblock \emph{arXiv preprint arXiv:2005.02819}, 2020.

\bibitem[Kreuzer et~al.(2021)Kreuzer, Beaini, Hamilton, L{\'e}tourneau, and
  Tossou]{kreuzer2021rethinking}
Devin Kreuzer, Dominique Beaini, Will Hamilton, Vincent L{\'e}tourneau, and
  Prudencio Tossou.
\newblock Rethinking graph transformers with spectral attention.
\newblock \emph{Advances in Neural Information Processing Systems},
  34:\penalty0 21618--21629, 2021.

\bibitem[Lim et~al.(2022)Lim, Robinson, Zhao, Smidt, Sra, Maron, and
  Jegelka]{lim2022sign}
Derek Lim, Joshua Robinson, Lingxiao Zhao, Tess Smidt, Suvrit Sra, Haggai
  Maron, and Stefanie Jegelka.
\newblock Sign and basis invariant networks for spectral graph representation
  learning.
\newblock \emph{arXiv preprint arXiv:2202.13013}, 2022.

\bibitem[Lu and Getoor(2003)]{Lu2003LinkBasedC}
Qing Lu and Lise Getoor.
\newblock Link-based classification.
\newblock In \emph{Encyclopedia of Machine Learning and Data Mining}, 2003.

\bibitem[Luo et~al.(2010)Luo, Huang, Ding, and Nie]{luo2010eigenvectors}
Dijun Luo, Heng Huang, Chris Ding, and Feiping Nie.
\newblock On the eigenvectors of p-laplacian.
\newblock \emph{Machine Learning}, 81\penalty0 (1):\penalty0 37--51, 2010.

\bibitem[Morris et~al.(2019)Morris, Ritzert, Fey, Hamilton, Lenssen, Rattan,
  and Grohe]{morris2019weisfeiler}
Christopher Morris, Martin Ritzert, Matthias Fey, William~L Hamilton, Jan~Eric
  Lenssen, Gaurav Rattan, and Martin Grohe.
\newblock Weisfeiler and leman go neural: Higher-order graph neural networks.
\newblock In \emph{Proceedings of the AAAI conference on artificial
  intelligence}, volume~33, pages 4602--4609, 2019.

\bibitem[Morris et~al.(2021)Morris, Lipman, Maron, Rieck, Kriege, Grohe, Fey,
  and Borgwardt]{morris2021weisfeiler}
Christopher Morris, Yaron Lipman, Haggai Maron, Bastian Rieck, Nils~M Kriege,
  Martin Grohe, Matthias Fey, and Karsten Borgwardt.
\newblock Weisfeiler and leman go machine learning: The story so far.
\newblock \emph{arXiv preprint arXiv:2112.09992}, 2021.

\bibitem[Vaswani et~al.(2017)Vaswani, Shazeer, Parmar, Uszkoreit, Jones, Gomez,
  Kaiser, and Polosukhin]{original_transformer}
Ashish Vaswani, Noam Shazeer, Niki Parmar, Jakob Uszkoreit, Llion Jones,
  Aidan~N Gomez, \L~ukasz Kaiser, and Illia Polosukhin.
\newblock Attention is all you need.
\newblock In I.~Guyon, U.~Von Luxburg, S.~Bengio, H.~Wallach, R.~Fergus,
  S.~Vishwanathan, and R.~Garnett, editors, \emph{Advances in Neural
  Information Processing Systems}, volume~30. Curran Associates, Inc., 2017.
\newblock URL
  \url{https://proceedings.neurips.cc/paper/2017/file/3f5ee243547dee91fbd053c1c4a845aa-Paper.pdf}.

\bibitem[Wang et~al.(2022)Wang, Yin, Zhang, and Li]{wang2022equivariant}
Haorui Wang, Haoteng Yin, Muhan Zhang, and Pan Li.
\newblock Equivariant and stable positional encoding for more powerful graph
  neural networks.
\newblock In \emph{International Conference on Learning Representations}, 2022.
\newblock URL \url{https://openreview.net/forum?id=e95i1IHcWj}.

\bibitem[Xu et~al.(2018)Xu, Hu, Leskovec, and Jegelka]{xu2018powerful}
Keyulu Xu, Weihua Hu, Jure Leskovec, and Stefanie Jegelka.
\newblock How powerful are graph neural networks?
\newblock \emph{arXiv preprint arXiv:1810.00826}, 2018.

\bibitem[Zhang et~al.(2021)Zhang, Li, Xia, Wang, and Jin]{zhang2021labeling}
Muhan Zhang, Pan Li, Yinglong Xia, Kai Wang, and Long Jin.
\newblock Labeling trick: A theory of using graph neural networks for
  multi-node representation learning.
\newblock \emph{Advances in Neural Information Processing Systems},
  34:\penalty0 9061--9073, 2021.

\end{thebibliography}

\appendix





\section{$p$-Laplacian}
We start with the following definition of the relevant notions of the graph $p$-Laplacian and its eigenvectors:

\begin{definition}[$p$-Laplacian]
Let $G$ be a graph with adjacency matrix $\mathbf{W} =(w_{i,j})_{i,j=1}^n$. For $p>1$, the corresponding graph $p$-Laplacian $\Delta_p: \mathbb{R}^n \to \mathbb{R}^n  $ operating on $f\in\mathbb{R}^n$ is defined by 
  $(\Delta_p f)_i = \sum_i \phi_p(f_i - f_j) $,
 where $ \phi_p(x) = |x|^{p-1} \mathrm{sign}(x)$. 
\end{definition}

 For $p = 2$, the graph $p$-Laplacian $\Delta_p$  is the unnormalized graph Laplacian, which is the only linear graph $p$-Laplacian. Thus, a natural question arises: how eigenvectors or eigenvalues can be defined for general graph $p$-Laplacians. 
 
 \begin{definition}
 Let $p >1 $. A non-zero vector $v$ is called a $p$-eigenvector of $\Delta_p$ if there exists  $\lambda \in \mathbb{R} $ such that
 \[
 (\Delta_p v)_i = \lambda \phi_p(v_i)
  \]
  for all $i=1, \ldots, n$. The value $\lambda$ is called the  $p$-eigenvalue corresponding to $v$.
 \end{definition}

The $p$-eigenvectors share many properties with the standard eigenvectors of the graph Laplacian.
For instance, the connectivity of the graph is represented by the multiplicity of its first eigenvector. Previous works leveraged $p$-eigenvectors for graph representational learning, for example, \citet{buhler2009spectral}, used the fact that for $p \to 1$ the $p$-eigenvectors converge to the ratio Cheeger cut. However, note that not all nice properties of the usual eigenvectors of the 2-Laplacian carry over. For example, some graphs have more eigenvectors than the number of vertices in the graph \citep{deidda2022nodal}.
Let 
$$Q_p(f) = \langle f,\Delta_p f  \rangle = \frac{1}{2}\sum_{i,j=1}^n w_{ij}|f_i-f_j|^p$$
and 
$$F_p(f) = \frac{Q_p(f)}{\lVert f \rVert}\,.$$
All $p$-eigenvectors can be characterized as stationary points of $F_p$, similarly to the standard Laplacian. 
\begin{theorem}[\citep{buhler2009spectral}]
A vector $f$ is a critical point of $F_p(f)$ if and only if $f$ is a $p$-eigenvector.
\end{theorem}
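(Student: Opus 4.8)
The plan is to treat $F_p$ as a smooth Rayleigh-type quotient and to characterize its critical points by computing $\nabla F_p$ explicitly and setting it to zero. The decisive preliminary fact is that for $p>1$ the scalar map $x\mapsto|x|^p$ is continuously differentiable with derivative $p|x|^{p-1}\mathrm{sign}(x)=p\,\phi_p(x)$, the derivative extending continuously through $x=0$. Hence both the numerator $Q_p$ and the denominator --- which I read as $\lVert f\rVert_p^p=\sum_i|f_i|^p$, the $p$-homogeneous quantity that makes $F_p$ scale-invariant --- are $C^1$ away from the origin, so ordinary gradient calculus is legitimate. This is exactly where the hypothesis $p>1$ enters, and it is the only place it is needed; at $p=1$ the functional is nonsmooth and one would have to pass to subdifferentials.

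First I would compute $\nabla Q_p$. Differentiating $Q_p(f)=\tfrac12\sum_{i,j}w_{ij}|f_i-f_j|^p$ in $f_k$ by the chain rule produces two groups of terms (from $\delta_{ik}$ and from $\delta_{jk}$); using the symmetry $w_{ij}=w_{ji}$ and the oddness $\phi_p(-x)=-\phi_p(x)$ they coincide and collapse to
\[
\bigl(\nabla Q_p(f)\bigr)_k=p\sum_j w_{kj}\,\phi_p(f_k-f_j)=p\,(\Delta_p f)_k ,
\]
so $\nabla Q_p(f)=p\,\Delta_p f$. The denominator is handled the same way, giving $\nabla\lVert f\rVert_p^p=p\,\phi_p(f)$ with $\phi_p$ applied entrywise.

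Next I would apply the quotient rule. Writing $N(f)=\lVert f\rVert_p^p$, the equation $\nabla F_p=(N\nabla Q_p-Q_p\nabla N)/N^2=0$ simplifies, after removing the common factor $p/N$, to
\[
\Delta_p f=\frac{Q_p(f)}{N(f)}\,\phi_p(f)=F_p(f)\,\phi_p(f),
\]
which is exactly the $p$-eigenvector equation $\Delta_p f=\lambda\,\phi_p(f)$ with $\lambda=F_p(f)$. Both implications follow once the eigenvalue is pinned down: pairing either equation with $f$ and using the discrete integration-by-parts identity $\langle f,\Delta_p f\rangle=Q_p(f)$ together with $\langle f,\phi_p(f)\rangle=\lVert f\rVert_p^p=N(f)$ forces $\lambda=Q_p(f)/N(f)=F_p(f)$ automatically. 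Thus any $p$-eigenvector makes the gradient vanish, and conversely any vanishing-gradient point solves the eigenvector equation.

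The subtlety I expect to be the main obstacle is the precise meaning of ``critical point'', since $F_p$ is $0$-homogeneous and therefore satisfies $\langle\nabla F_p(f),f\rangle=0$ identically by Euler's relation; in particular its critical points are never isolated but come in rays, and the natural statement is criticality of the restriction of $F_p$ to a sphere. I would reconcile this through the Lagrange-multiplier picture on the unit Euclidean sphere: $f$ is critical there iff $\nabla F_p(f)\parallel f$, but because $\nabla F_p(f)\perp f$ always, this forces $\nabla F_p(f)=0$, so sphere-criticality coincides with the vanishing-gradient condition derived above. This identification both closes the equivalence and shows that the corresponding $p$-eigenvalue equals the critical value $F_p(f)$.
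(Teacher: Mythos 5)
Your proof is correct: the paper itself states this theorem without proof, citing B\"uhler and Hein (2009), and your Rayleigh-quotient differentiation --- computing $\nabla Q_p(f)=p\,\Delta_p f$ and $\nabla\lVert f\rVert_p^p=p\,\phi_p(f)$, applying the quotient rule, and pinning down $\lambda=F_p(f)$ via the identities $\langle f,\Delta_p f\rangle=Q_p(f)$ and $\langle f,\phi_p(f)\rangle=\lVert f\rVert_p^p$ --- is essentially the argument given in that reference. Your reading of the ambiguously typeset denominator as $\lVert f\rVert_p^p$ is the correct one (it is what makes $F_p$ scale-invariant and the statement true), and your observation that $0$-homogeneity collapses sphere-criticality to the condition $\nabla F_p(f)=0$ cleanly resolves the only genuine ambiguity in the statement.
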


It is well-known that $p$-PEs are an approximation of $p$-eigenvectors \citep{buhler2009spectral}. 
To better understand $p$-eigenvectors and $p$-PEs, we visualize the first three non-trivial $p$-PEs for $p=1.2, 1.6$ and $2$ of the Minnesota graph in Figure \ref{fig:Minnesota}.   Note that the case $p=2$ is equivalent to the LPE, i.e., the eigenvectors of the unnormalized Laplacian.

\begin{figure}
\centering
    \subfigure[p = 1.2][c]{\label{fig:p12}%
      \includegraphics[width=\linewidth]{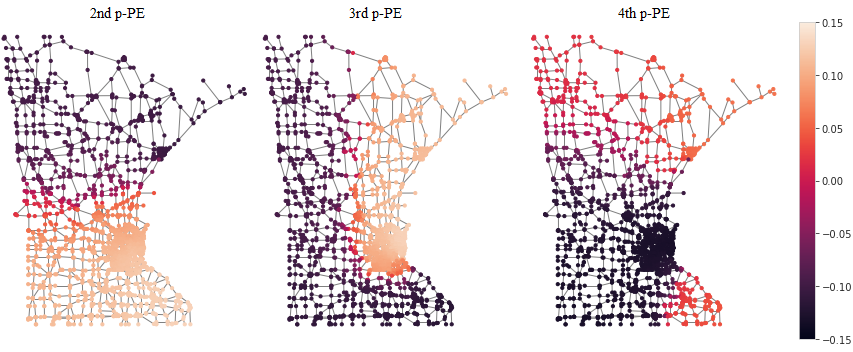}}%
    \\
    \subfigure[p = 1.6][c]{\label{fig:p16}%
      \includegraphics[width=\linewidth]{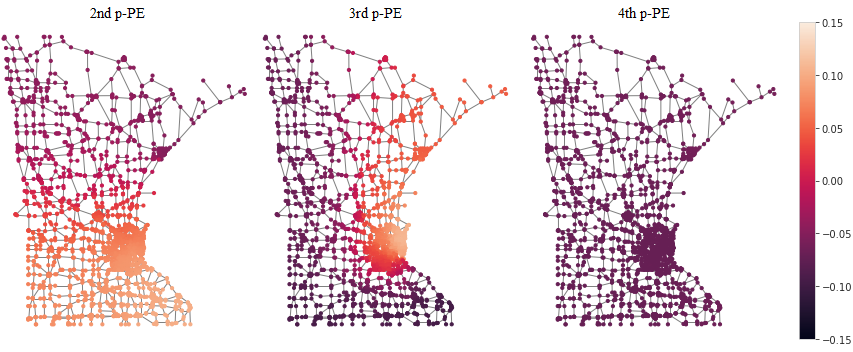}}
      \\
    \subfigure[p = 2][c]{\label{fig:p20}%
      \includegraphics[width=\linewidth]{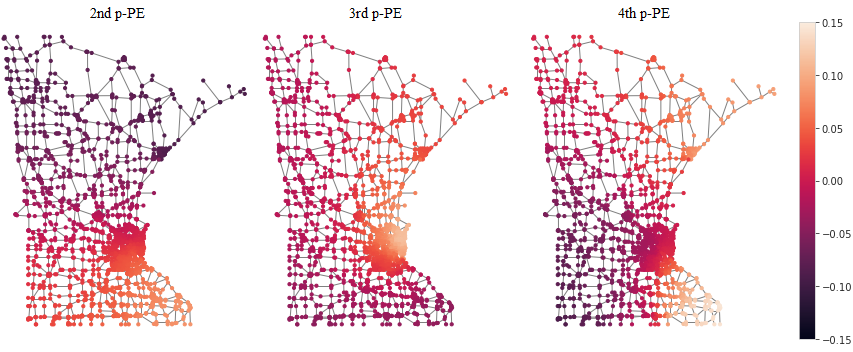}}%
      {\caption{Comparison between $p$-PEs ($p = 1.2, 1.6, 2$) on the Minnesota graph.}
    \label{fig:Minnesota}}    
\end{figure}

\section{Proofs}

\begin{lemma}[\citet{buhler2009spectral}]\label{lemma:zeroEV}
Let $G$ be a graph with $k$ connected components. The set of $p$-LPEs corresponding to the $p$-eigenvalue 0 is a linear subspace with dimension $k$ spanned by the indicator functions of the connected components.
\end{lemma}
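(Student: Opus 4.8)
The plan is to reduce the nonlinear eigenvalue equation at eigenvalue $0$ to the single linear condition $Q_p(v)=0$, and then read off constancy of $v$ on connected components from the non-negativity of the summands. First I would observe that $v$ is a $p$-eigenvector with $p$-eigenvalue $0$ exactly when $(\Delta_p v)_i=0$ for every $i$, since the defining equation $(\Delta_p v)_i=\lambda\,\phi_p(v_i)$ degenerates to $(\Delta_p v)_i=0$ when $\lambda=0$. Pairing with $v$ and using the identity $Q_p(v)=\langle v,\Delta_p v\rangle=\frac{1}{2}\sum_{i,j} w_{ij}|v_i-v_j|^p$ then forces $Q_p(v)=0$. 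Conversely, if $Q_p(v)=0$ then, because $w_{ij}\ge 0$ and every summand $w_{ij}|v_i-v_j|^p$ is non-negative, each summand must vanish; hence $v_i=v_j$ whenever $w_{ij}>0$, and a direct computation gives $(\Delta_p v)_i=\sum_j w_{ij}\phi_p(v_i-v_j)=0$, so $v$ is a $p$-eigenvector with eigenvalue $0$. Thus the set of $p$-eigenvectors of eigenvalue $0$ coincides, up to the zero vector, with $\{v:Q_p(v)=0\}$.

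Second, I would translate $Q_p(v)=0$ into a statement about connected components. The vanishing of every summand means $v_i=v_j$ for all adjacent $i,j$; propagating this equality along paths shows that $v$ is constant on each connected component of $G$. Consequently $\{v:Q_p(v)=0\}$ is exactly the span of the indicator functions $\mathbf{1}_{C_1},\dots,\mathbf{1}_{C_k}$ of the $k$ connected components $C_1,\dots,C_k$. Since these indicators have pairwise disjoint supports they are linearly independent, so the span is a linear subspace of dimension exactly $k$, as claimed.

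The conceptual heart of the argument, and the step I expect to matter most, is the first reduction. For general $p\neq 2$ the operator $\Delta_p$ is nonlinear, so $p$-eigenspaces are typically \emph{not} linear subspaces and one should not expect the eigenvectors at a fixed eigenvalue to be closed under linear combinations. What rescues the statement at eigenvalue $0$ is that the eigenvalue equation collapses to $\Delta_p v=0$, which by the inner-product identity is equivalent to the genuinely linear condition $Q_p(v)=0$; it is this collapse, together with the non-negativity of the terms in $Q_p$, that makes the eigenspace linear and pins down its dimension. The remaining steps (constancy on components, disjoint supports) are routine. One minor point of care is that eigenvectors are nonzero by definition, so the statement is understood to include the zero vector in order to speak of a genuine linear subspace.
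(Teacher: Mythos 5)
The paper gives no proof of this lemma; it is imported verbatim from \citet{buhler2009spectral}, so there is nothing internal to compare against. Your argument is correct and is essentially the standard one: the eigenvalue-$0$ equation collapses to $\Delta_p v=0$, which via the identity $\langle v,\Delta_p v\rangle=\tfrac{1}{2}\sum_{i,j}w_{ij}\lvert v_i-v_j\rvert^p$ and the non-negativity of the summands is equivalent to $v$ being constant on each connected component, yielding the $k$-dimensional span of the component indicators; your remark that linearity of the eigenspace is special to $\lambda=0$ (since $\Delta_p$ is nonlinear for $p\neq 2$) is exactly the right point to flag.
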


\begin{lemma}[\citet{buhler2009supplementary}]\label{lemma:nonzeroEV}
Let $v$ be a $p$-eigenvector with nonzero $p$-eigenvalue. Then $\sum_i \phi(v_i)=0$.
\end{lemma}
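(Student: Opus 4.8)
The plan is to sum the $p$-eigenvector equation over all coordinates and exploit the antisymmetry of $\phi_p$ together with the symmetry of the adjacency weights. Starting from the defining relation $(\Delta_p v)_i = \lambda\,\phi_p(v_i)$, which holds for every $i$, I would add these $n$ equations to obtain
\[
\sum_{i=1}^n (\Delta_p v)_i = \lambda \sum_{i=1}^n \phi_p(v_i).
\]
The right-hand side already contains the quantity of interest, so the entire argument reduces to showing that the left-hand side vanishes.

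To evaluate the left-hand side I would expand the definition of the $p$-Laplacian in its weighted form $(\Delta_p v)_i = \sum_j w_{ij}\,\phi_p(v_i - v_j)$, consistent with the quadratic form $Q_p(v) = \tfrac{1}{2}\sum_{i,j} w_{ij}|v_i - v_j|^p$, giving the double sum $S := \sum_{i,j} w_{ij}\,\phi_p(v_i - v_j)$. The key observation is that $S$ is antisymmetric under interchange of the two summation indices: relabelling $i \leftrightarrow j$ leaves the value of $S$ unchanged, while the symmetry $w_{ij}=w_{ji}$ and the oddness $\phi_p(v_j - v_i) = -\phi_p(v_i - v_j)$ turn the relabelled expression into $-S$. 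Hence $S = -S$, forcing $S = 0$. This is the discrete analogue of a pairwise cancellation over edges; in the linear case $p=2$ it reduces to the familiar identity $\mathbf{1}^\top L = \mathbf{0}$, i.e. the constant vector lies in the kernel of the Laplacian.

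Combining the two computations yields $\lambda \sum_i \phi_p(v_i) = 0$, and since $v$ has a \emph{nonzero} $p$-eigenvalue $\lambda$ by hypothesis, I can divide by $\lambda$ to conclude $\sum_i \phi_p(v_i) = 0$, as claimed.

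I expect the only subtlety — rather than a genuine obstacle — to be bookkeeping around the definition of $\Delta_p$: one must use the weighted form compatible with $Q_p$ and keep the symmetry of $w$ and the oddness of $\phi_p$ straight when performing the index swap. No convexity, variational, or manifold machinery is needed; the result follows purely algebraically from summing the eigenvalue equation, and the nonvanishing of $\lambda$ is exactly what licenses the final division.
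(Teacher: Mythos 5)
Your proof is correct and matches the standard argument: the paper itself does not prove this lemma (it is cited from B\"uhler and Hein's supplementary material), and that cited proof proceeds exactly as you do --- sum the eigenvalue equation $(\Delta_p v)_i = \lambda\,\phi_p(v_i)$ over $i$, kill the left-hand side via the symmetry $w_{ij}=w_{ji}$ together with the oddness of $\phi_p$, and divide by $\lambda \neq 0$. You also correctly repaired the typo in the paper's definition of $\Delta_p$ (missing weights and a misprinted summation index) by using the weighted form $(\Delta_p v)_i = \sum_j w_{ij}\,\phi_p(v_i - v_j)$ consistent with the quadratic form $Q_p$.
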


\BetterThanWL*
\begin{proof}
Let $G$ be a $k$-regular connected graph and $H$ a $k$-regular disconnected graph with the same number of nodes as $G$. For example, a 6-cycle as $G$ and two disjoint 3-cycles as $H$ suffices.
1-WL cannot distinguish $G$ and $H$ with the standard initial constant  node features since these graphs are $k$-regular.
Let $v^1,v^2$ be the given two $p$-eigenvectors for $G$ and $w^1,w^2$ the given two $p$-eigenvectors for $H$, sorted by their respective $p$-eigenvalues. By Lemma \ref{lemma:zeroEV}, $v^1$ is the constant vector on $V(G)$ and by Lemma \ref{lemma:nonzeroEV}, $v^2$ satisfies $\sum_i \phi(v^2_i)=0$. For $H$, without loss of generality, we can assume that $w^1$ and $w^2$ are the indicator functions of the two different connected components of $H$ by Lemma \ref{lemma:zeroEV}. In particular, $\sum_i \phi(w^1_i)>0$ and $\sum_i \phi(w^2_i)>0$.

Let $(v^1_i,v^2_i)$ be the initial embedding of the $i$-th node in $G$ and $(w^1_i,w^2_i)$ be the initial embedding of the $i$th node of $H$.
As MPNNs can learn the identify function \citep{morris2019weisfeiler}, we can assume that message passing does not change these embeddings. Hence, we can use $v^2$ and $w^2$ to distinguish $G$ and $H$. We can apply a sufficiently large MLP (relying on the universal approximation theorem \citep{cybenko1989approximation}) to approximate $\phi$ and then apply summation leading to a value that can distinguish the two cases $\sum_i \phi(w^2_i)=0$ and $\sum_i \phi(v^2_i)>0$. 
Finally, we claim that 1-WL is never more expressive as MPNNs with $p$-eigenvectors. For that, simply note that an MPNN can ignore the PE and run standard message passing.
\end{proof}

\section{Further Experimental Details}
\label{sec:Experiments}
\subsection{Node-Classification Task}
We consider the citation networks
Cora and  Citeseer \citep{Lu2003LinkBasedC,Getoor2001LearningPM} with the standard  training, validation, and test splits, using only on the
largest connected component in each network. As MPNN, we consider GCN \citep{kipf2017semi} with $2$ layers and train with standard Adam. The tuning
of the other hyperparameters was done using a standard random search algorithm, implemented via Weights\&Biases \citep{wandb}. The tuned hyperparamaters are the learning rate ($0.01$ - $0.1$), the weight decay ($0.001$ - $0.1$), the hidden channels ($8,16,32, 64$), and the dropout ($0$ - $0.5$). 
The hyperparameter configuration with the best validation accuracy is then trained over four different seeds. We report the  mean accuracy and standard deviation over the test data set in Table \ref{table:node_classification}.

\subsection{Graph-Regression Task}
For the graph-regression experiments, we consider the ZINC dataset of molecule graphs \citep{ZINC}, where we only consider the
subset of 12,000 graphs from \citep{dwivedi2020benchmarking}.   We follow the implementation and configurations from \citep{lim2022sign}, see Appendix J.2 and \url{https://github.com/cptq/SignNet-BasisNet}.

\end{document}